\DeclareMathOperator{\nextop}{\textsf{X}}
\DeclareMathOperator{\untilop}{\textsf{U}}
\DeclareMathOperator{\releaseop}{\textsf{R}}
\DeclareMathOperator{\eventuallyop}{\textsf{F}}
\DeclareMathOperator{\globallyop}{\textsf{G}}
\DeclareMathOperator{\weaknextop}{\textsf{N}}
\DeclareMathOperator{\weakuntilop}{\textsf{W}}
\newcommand{\var}{\mathrm{var}}
\newcommand{\LTL}{$\mathsf{LTL}$}
\newcommand{\LTLf}{$\mathsf{LTL}_f$}
\newcommand{\NeuralLTLf}{Neural$\mathsf{LTL}_f$}
\newtheorem{lemma}{Lemma}
\newtheorem{theorem}{Theorem}
\newtheorem{definition}{Definition}
\newcommand{\namecite}[1]{\citeauthor{#1}~\citeyearpar{#1}}
\title{Learning Finite Linear Temporal Logic Specifications \\ with a Specialized Neural Operator}
\author {
    Homer Walke \textsuperscript{\rm 1 \rm 2}
    Daniel Ritter \textsuperscript{\rm 1}
    Carl Trimbach \textsuperscript{\rm 1}
    Michael Littman \textsuperscript{\rm 1}
}
\begin{document}

\maketitle

\begin{abstract}
Finite linear temporal logic (\LTLf) is a powerful formal representation for modeling temporal sequences. 
We address the problem of learning a compact \LTLf\ formula from labeled traces of system behavior. 
We propose a novel neural network operator and evaluate the resulting architecture, \NeuralLTLf. 
Our approach includes a specialized recurrent filter, designed to subsume \LTLf\ temporal operators, to learn a highly accurate classifier for traces. Then, it discretizes the activations and extracts the truth table represented by the learned weights. This truth table is converted to symbolic form and returned as the learned formula. 
Experiments on randomly generated \LTLf\ formulas show \NeuralLTLf\ scales to larger formula sizes than existing approaches and maintains high accuracy even in the presence of noise.
\end{abstract}

\section{Introduction}
Recurrent neural networks (RNNs) have proven highly effective at learning classifiers for sequential data. Yet, RNNs typically employ a large number of parameters leading to a lack of interpretability in the decisions they make. Linear temporal logic (\LTL) and its finite variant (\LTLf) are alternative representations for classifying sequential data in a symbolic, human-understandable manner \cite{pnueli1977temporal,de2013linear}. However, learning \LTLf\ formulas has proven to be a difficult task. We propose \NeuralLTLf{}, a new technique for learning classifiers for temporal behavior that combines the ease of optimization of RNNs with the interpretability of \LTLf.

\LTLf\ learning techniques are central to specification mining, or the extraction of temporal logic formulas from the execution traces of programs for formal verification \cite{lemieux2015general}.
\LTLf\ is also applicable in learning from demonstrations. After a human teacher demonstrates the desired behavior to a learning agent, the agent produces an \LTLf\ formula summarizing the behavior~\cite{vazquez2018learning, kasenberg2017interpretable}, and the formula is used in place of a reward function in the context of reinforcement learning~\cite{littman2017environment, li2017reinforcement}. 

We examine the problem of producing a compact \LTLf\ formula that correctly classifies traces of system behavior given labeled examples.

\begin{definition}[\LTLf{} Learning Problem]
Given a set of finite-length positive traces, $\Pi_P$, and a set of finite-length negative traces, $\Pi_N$, produce a compact \LTLf{} formula satisfied by the positive traces and violated by the negative traces. 
\end{definition}

The \LTLf\ learning approach of \namecite{camacho2019learning} is most directly related to our work. They reduce the \LTLf\ learning problem to SAT, but their method does not scale well to larger formula sizes or trace sets. Unlike \NeuralLTLf{}, their method fails to find an appropriate formula when the data is noisy and not perfectly separable with a formula of the specified size.
\namecite{neider2018learning} combine SAT solving and decision trees to produce \LTL\ formulas in the presence of noisy data, but they face similar scaling issues.
\namecite{kim2019bayesian} use Bayesian inference to learn \LTL\ formulas for a limited set of 
\LTL\ templates. \namecite{Mao2021Temporal} also present a neural network architecture inspired by \LTL, but their method does not produce formulas.

Our contributions are:
\begin{enumerate}
    \item \NeuralLTLf{}, a method for producing \LTLf{} formulas that classify traces.
    \item Evaluation of \NeuralLTLf{} on synthetic data for qualitative \LTLf\ formulas.
    \item Comparison of \NeuralLTLf{} to SAT-based approaches.
\end{enumerate}

\section{Linear Temporal Logic}
Linear temporal logic (\LTL) is a formal language used to express temporal properties of sequential data. \LTL\ formulas consist of a set of propositions $p \in P$, standard logical operators, and temporal operators. Formulas are evaluated over traces, $\pi$, which are sequences of truth assignments to all the propositions in $P$. \LTL\ is defined on infinite-length traces. In this paper, we use a variant, \LTLf, defined on finite-length traces~\cite{de2013linear}. The notation $\pi, t \models \phi$ denotes that the formula $\phi$ holds at timestep $t$ in trace $\pi$ where $0 \leq t < T$, and $T$ is the trace length. When $\pi, 0 \models \phi$, we say the trace $\pi$ \emph{satisfies} the formula $\phi$. 

The minimal temporal operators are next ($\nextop$) and until ($\untilop$). Next, $\nextop \phi$, denotes that $\phi$ will hold in the following timestep, while until, $\phi \untilop \psi$, denotes that $\phi$ must hold until $\psi$ becomes true.
A number of temporal operators can be formed from these operators. The eventually ($\eventuallyop$) operator denotes that a variable holds at some timestep in the future: $\eventuallyop \phi \Longleftrightarrow \textit{true} \untilop \phi$. The globally ($\globallyop$) operator denotes that a variable holds at all subsequent timesteps: $\globallyop \phi \Longleftrightarrow \lnot \eventuallyop \lnot \phi$. \LTLf\ introduces an additional temporal operator, weak next ($\weaknextop$), to address behavior at the end of a trace. Weak next, $\weaknextop \phi$, denotes that $\phi$ must hold at the next time step \emph{or} the next time step does not exist. 
In this paper, we also use the weak until operator defined as $\phi \weakuntilop \psi \Longleftrightarrow (\phi \untilop \psi) \lor \globallyop \phi$. Weak until is similar to $\phi \untilop \psi$, except that $\psi$ does not need to occur. Finally, the fragment of \LTL\ consisting of only next and next-derived operators is called \emph{metric} \LTL, while the fragment consisting of only until and until-derived operators is called \emph{qualitative} \LTL. 

\section{\NeuralLTLf{}}

Inspired by the temporal operators, we present a novel network architecture for classifying traces. Layers in the network consist of multiple filters, similar to convolutional filters~\cite{fukushima1979neural}. The filters in each layer are ``soft'' versions of \LTLf\ operators. By stacking layers in the network, subsequent filters are applied to the results of previous filters, equivalent to the nesting of \LTLf\ operators in a formula. The entire network encodes a single \LTLf\ formula.

The first layer of our network takes as input traces from the positive and negative trace sets. The truth values of the propositions in the traces are interpreted as 1 and 0 for \textit{true} and \textit{false}, respectively. The filters in the first layer are applied to the trace to generate a sequence of activations in $[0, 1]$. These activations form a new trace, which becomes the input to the subsequent layer.
These intermediate traces represent the truth values of the soft \LTLf\ operators encoded by the filters. The activations of the final layer correspond to the network's prediction of the truth value for every timestep in the original trace. Corresponding to the semantics of \LTLf, we use the truth value of the first timestep in the network output as the predicted label of the trace. We compare the predicted and target labels of the trace to compute a loss that we minimize via gradient descent. Figure~\ref{fig:overview} depicts a \NeuralLTLf{} network.

\begin{figure}
    \centering
    \includegraphics[scale=0.4]{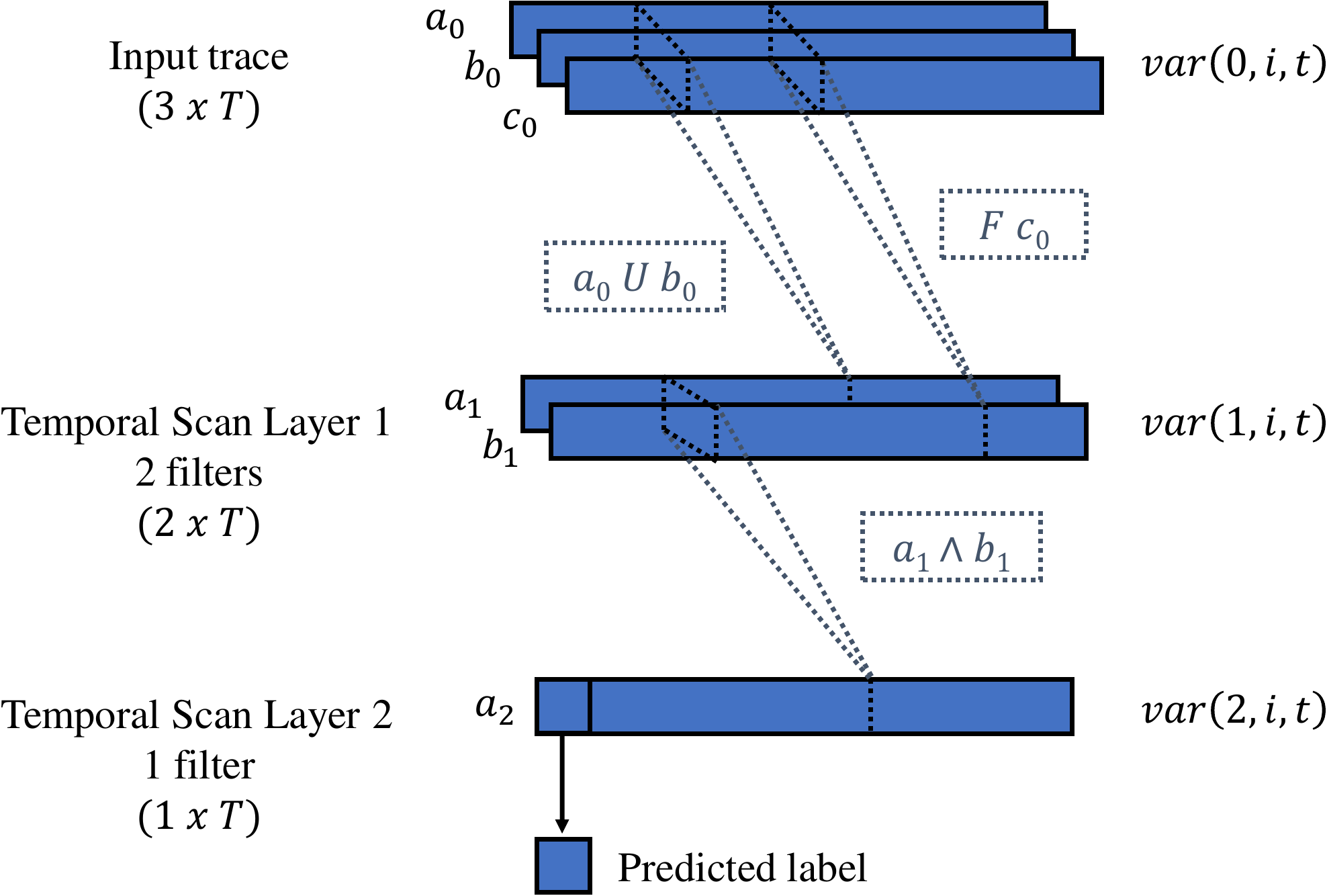}
    \caption{A \NeuralLTLf{} network that encodes the formula $(a \untilop{} b) \land \eventuallyop{} c$. Solid boxes are the input trace and output activations. Dashed lines represent the application of \NeuralLTLf filters. The formulas in the dashed boxes represent formula fragments learned by each filter.}
    \label{fig:overview}
\end{figure}

\subsection{Network Weights}
\label{sec:netweights}
Each layer, $l$, of the network consists of at least 1, but possibly multiple, filters indexed by $i$. These filters act on a sequence of truth values for variables indexed by $j$. The sequence of truth values might be the original trace, in the case of the first layer, or the output of the previous layer. We use $\var(l, i, t)$ to denote the activation of filter $i$ at timestep $t$ in layer $l$. The input trace is $\var(0, i, t)$.

A filter consists of a set of weights that allow for the expression of standard logical operators and temporal operators.
\begin{itemize}
    \item $W_P(l, i, j)$ is the propositional weight of filter $i$ in layer $l$ for variable $j$ and allows for the expression of standard logical operators. 
    \item $W_M(l, i, j)$ is the metric weight of filter $i$ in layer $l$ for variable $j$ and allows for the expression of metric temporal operators. 
    \item $W_Q(l, i)$ is the qualitative weight of filter $i$ in layer $l$ and allows for the expression of qualitative temporal operators. 
    \item $b(l, i)$ is the bias term for filter $i$ in layer $l$.
    \item $\var(l-1, j, T+1)$ and $\var(l, i, T+1)$ are base values.
\end{itemize}
Together, these weights define a linear classifier that gives the truth value of the soft \LTLf\ operator represented by the filter. The weights of one \NeuralLTLf{} filter can represent various \LTLf\ operators (Table~\ref{tab:weights}).

We apply a filter to a sequence using the following formula:
\begin{eqnarray}
\label{e:filter}
\var(l, i, t) =  \sigma \hspace{-8pt} &\biggl(\hspace{-8pt}&\sum_j W_P(l, i, j){\var}(l-1, j, t)\ \\
&& + \ \sum_j W_M(l, i, j){\var}(l-1, j, t+1)\ \nonumber \\
&& + \ \delta(W_Q(l, i)){\var}(l, i, t+1) + b(l, i)\biggr)\nonumber,
\end{eqnarray}
where $\delta$ is the function $\max(0, x)$, since we require that $W_Q$ is positive for formula extraction (Section~\ref{sec:conversion}). However, $\delta$ takes a slightly different form for training (Section~\ref{sec:implementation}). Similarly, $\sigma$ is the binary step function, $\mathds{1}_{[0, \infty)}$, for formula extraction and the sigmoid activation during training.

In words, applying a filter to a sequence is a recursive operation in the timestep $t$, corresponding to the recursive evaluation of an \LTLf\ operator on a trace. The recursion begins at timestep $T$ and the base case values, $\var(l-1, j, T+1)$ and $\var(l, i, T+1)$, are parameters learned along with the weights. Running backwards temporally, the output of a filter is computed as $\sigma$ applied to the sum of the propositional weights applied to the variables at the current timestep, the metric weights applied to the variables at the next timestep, and the qualitative weight applied to the output of the filter at the next timestep. 

The weights are trained via gradient descent to accurately capture the classification of the example traces.

\begin{table*}
\setlength{\tabcolsep}{3pt} %
\renewcommand{\arraystretch}{1.5} %
\begin{center}
\caption{Example weights for \LTLf\ operators: Assume one \NeuralLTLf{} filter, $i$, applied to two truth value sequences representing the \LTLf\ formulas $\phi$ and $\psi$. Standard logical operators are also easy to express using the $W_P$ weights. Weights that can take any value are marked with $-$. We abuse notation and use $W(l, i, \phi)$ to mean the weight applied to the truth value sequence representing $\phi$. }
\label{tab:weights}
\begin{tabular}{ | c | c | c | c | c | c | c | c | c | c |}
 \hline
 \LTLf\ Op. & $W_P(l, i, \phi)$ & $W_P(l, i, \psi)$ & $W_M(l, i, \phi)$ & $W_M(l, i, \psi)$ & $W_Q(l, i)$ & $b(l, i)$ & $\var(l-1, \phi, T+1)$ & $\var(l, i, T+1)$ \\
 \hline
  $\phi \untilop \psi$ & 1 & 2 & 0 & 0 & 1 & $-1.5$ & $-$ & 0 \\
 \hline
 $\phi \weakuntilop \psi$ & 1 & 2 & 0 & 0 & 1 & $-1.5$ & $-$ & 1 \\
 \hline
 $\nextop \phi$ & 0 & 0 & 1 & 0 & 0 & $-0.5$ & 0 & $-$\\
 \hline
 $\weaknextop \phi$ & 0 & 0 & 1 & 0 & 0 & $-0.5$ & 1 & $-$\\
 \hline
 $\eventuallyop \phi$ & 1 & 0 & 0 & 0 & 1 & $-0.5$ & $-$ & 0\\
 \hline
 $\globallyop \phi$ & 1 & 0 & 0 & 0 & 1 & $-1.5$ & $-$ & 1 \\
 \hline
\end{tabular}
\end{center}
\end{table*}

\subsection{Conversion from Network Weights to Formula}
\label{sec:conversion}

After a network has been trained, the learned weights can be interpreted as an \LTLf\ formula. In this interpretation, each \NeuralLTLf{} filter encodes an \LTLf\ expression that has the form $\phi \untilop \psi$ or $\phi \weakuntilop \psi$. Here, $\phi, \psi$ are Boolean expressions in full disjunctive normal form (fDNF) with additional literals for the next state of each proposition in each clause that are prepended by $\nextop$ or $\weaknextop$ operators. This space of augmented DNF expressions joined by $\untilop$ or $\weakuntilop$ will be referred to as \textit{temporal normal form} (TNF) expressions (see Section \ref{sec:example} for a TNF example). 

To facilitate the interpretation of \NeuralLTLf{} filters, we first define a concept we call a \textit{temporal truth table}. A temporal truth table is similar to a standard Boolean logic truth table, but is augmented with extra information specific to \NeuralLTLf{} filters. It encodes an \LTLf\ expression in temporal normal form
using the usual columns for each of the $n$ propositions $x_j$, and the output column $f$. In addition, for each $x_j$,
temporal truth tables have a corresponding column $m_j$, collectively called the metric bits. These bits encode the semantics of the metric operators $\nextop{}$ and $\weaknextop{}$ by representing the value of each proposition one timestep into the future. Additionally, there is a column $\tau$, the temporal bit, that represents the future truth value of the formula. It encodes the semantics of the qualitative operators $\untilop{}, \weakuntilop{}, \eventuallyop{}$ and $\globallyop{}$ since their truth values depend on future values in the trace. Separate from the columns, each temporal truth table has an additional $n+1$ bits of information that encode the filter's behavior at the end of the trace. The first of these bits, $\Omega$, is calculated by passing the filter's learned base case value $\var(l, i, T+1)$ through the binary step function. The other $n$ bits, $\omega_j$, are found by passing the learned base case values $\var(l-1, j, T+1)$ through the binary step function.

The rows for the temporal truth table are filled by applying the values for each of the bits in the truth table to the appropriate input of the filter (Equation \ref{e:filter}). 
The $x_j$ columns are multiplied by the $W_P$ weights, the $m_j$ columns are multiplied by the $W_M$ weights, and the $\tau$ column is multiplied by the $W_Q$ weight. The process of converting a filter to a temporal truth table discretizes the continuous operation of the filter. The conversion algorithm is outlined in Algorithm~\ref{alg:filter-table}. An example completed temporal truth table
is shown in Table~\ref{tab:truth}. 

\begin{table}[!t]
 \begin{center}
  \caption{Temporal truth table for the formula $x_1 \untilop x_2$. 
  The rows in which $f=0$ do not contribute and were omitted for space. The values of the literals from this table are applied to the weights of the learned filter, shown in the $\phi \untilop \psi$ row of Table~\ref{tab:weights} with $\phi=x_1$ and $\psi = x_2$. The resultant value determines $f$. Here, $\Omega$, $\omega_1$, and $\omega_2$ are calculated by the binary step function applied to $\var(l,i,T+1)$, $\var(l-1,x_1,T+1)$ and $\var(l-1, x_2,T+1)$, respectively.
  }
 \label{tab:truth}
  \begin{tabular}{|c|c|c|c|c|c|}
  \hline
  \multicolumn{2}{|c}{$\Omega=0$} & \multicolumn{2}{c}{$\omega_1=0$} & \multicolumn{2}{c|}{$\omega_2=0$}                         \\ \hline
  $x_1$ & $x_2$ & $m_1$ & $m_2$ & $\tau$ & $f$ \\ \hline
  0     & 1     & 0      & 0      & 0      & 1   \\
  0     & 1     & 0      & 0      & 1      & 1   \\
  0     & 1     & 0      & 1      & 0      & 1   \\
  0     & 1     & 0      & 1      & 1      & 1   \\
  0     & 1     & 1      & 0      & 0      & 1   \\
  0     & 1     & 1      & 0      & 1      & 1   \\
  0     & 1     & 1      & 1      & 0      & 1   \\
  0     & 1     & 1      & 1      & 1      & 1   \\
  1     & 0     & 0      & 0      & 1      & 1   \\
  1     & 0     & 0      & 1      & 1      & 1   \\
  1     & 0     & 1      & 0      & 1      & 1   \\
  1     & 0     & 1      & 1      & 1      & 1   \\
  1     & 1     & 0      & 0      & 0      & 1   \\
  1     & 1     & 0      & 0      & 1      & 1   \\
  1     & 1     & 0      & 1      & 0      & 1   \\
  1     & 1     & 0      & 1      & 1      & 1   \\
  1     & 1     & 1      & 0      & 0      & 1   \\
  1     & 1     & 1      & 0      & 1      & 1   \\
  1     & 1     & 1      & 1      & 0      & 1   \\
  1     & 1     & 1      & 1      & 1      & 1   \\ \hline
 \end{tabular}
 \end{center}
\end{table}

\begin{algorithm}
  \caption{Convert Filter to Temporal Truth Table}
  \label{alg:filter-table}
\begin{algorithmic}
  \STATE {\bfseries Input:} filter layer $l$, filter index $i$, trace length $T$, number of variables $n$
  \STATE $f \gets$ empty truth table
  \STATE $\Omega \gets \sigma(\var(l, i, T+1))$
  \FOR{$j \in \{1 \dots n\}$}
  \STATE $\omega_j \gets \sigma(\var(l, j, T+1))$
  \ENDFOR
  \FOR{$k \in \{0,1\}^{2n+1}$}
  \STATE $x_1,x_2,\ldots, x_n, m_1,m_2,\ldots,m_n,\tau\ \gets k$
  \STATE $f[k] \gets \sigma(\Sigma_j W_P(l, i, j)x_j + \Sigma_j W_M(l, i, j)m_j $
  \STATE \;\;\;\;\;\;\;\;\;\;\;\;\;\;$+ \delta(W_Q(l, i))\tau + b(l, i))$
  \ENDFOR
  \STATE \textbf{return} $f$, $\Omega$, $\omega$
\end{algorithmic}
\end{algorithm}
 
Interpreting the temporal truth table is straightforward because the table can be used to construct a formula in temporal normal form.
The operator is determined by $\Omega$, since whether an until operator is weak or strong is determined by the base case values:
$\untilop$ for $\Omega=0$ and $\weakuntilop$ for $\Omega=1$. The formula $\phi$ is created by taking the disjunction of the conjunction ~\cite{rautenberg2010concise} of all the proposition and metric bits when $f=1$ and $\tau=1$. Formula $\psi$ is created by taking the disjunction of the conjunction of the proposition and metric bits in the rows where $f=1$ and $\tau=0$. The metric bits are prepended with $\nextop$ when the corresponding $\omega=0$ and with $\weaknextop$ when the corresponding $\omega=1$, since the choice of $\nextop{}$ or $\weaknextop{}$ is determined by base case values.
The rows in which $f=0$ do not contribute to the expression's representation. Algorithm~\ref{alg:table-formula} outlines the procedure of converting a temporal truth table to a formula.

The conversion procedure is applied to each filter in a network and the resulting formulas are composed according to the structure of the network (see Figure \ref{fig:overview}). 
We prove the correctness of the conversion procedure in the Supplementary Material.

\begin{algorithm}[!t]
  \caption{Convert Temporal Truth Table to Formula}
  \label{alg:table-formula}
\begin{algorithmic}
  \STATE {\bfseries Input:} number of vars $n$, temporal truth table $f$, $\Omega$, $\omega$
  \STATE $\phi \gets$ False
  \STATE $\psi \gets$ False
  \FOR{$k \in \{0,1\}^{2n+1}$}
  \STATE $x_1,x_2,\ldots, x_n, m_1,m_2,\ldots,m_n,\tau\ \gets k$
  \IF{$f[k] = 1$}
  \STATE $c \gets$ True
  \FOR{$j \in \{1 \dots n\}$}
  \IF{$x_j = 1$}
  \STATE $b_j \gets x_j$
  \ELSE
  \STATE $b_j\gets \lnot x_j$
  \ENDIF
  \IF{$m_j = 1$}
  \STATE $d_j \gets x_j$
  \ELSE
  \STATE $d_j \gets \lnot x_j$
  \ENDIF
  \IF{$\omega_j = 1$}
  \STATE $c \gets c \land b_j \land \weaknextop{} d_j$
  \ELSE
  \STATE $c \gets c \land b_j \land \nextop{} d_j$
  \ENDIF
  \ENDFOR
  \IF{$\tau = 1$}
  \STATE $\phi \gets \phi \lor c$
  \ELSE
  \STATE $\psi \gets \psi \lor c$
  \ENDIF
  \ENDIF 
  \ENDFOR
  \IF{$\Omega = 1$}
  \STATE \textbf{return} $\phi \weakuntilop{} \psi$
  \ELSE
  \STATE \textbf{return} $\phi \untilop{} \psi$
  \ENDIF
\end{algorithmic}
\end{algorithm}

\subsection{Example of Conversion Procedure}
\label{sec:example}
As an example, we will carry out the conversion procedure for a filter, $A$, that has the weights listed in the first row of Table \ref{tab:weights}. Converting filter $A$ should result in the formula $x_1 \untilop x_2$. 

In the first phase, we create a temporal truth table, $f$, by evaluating filter $A$, using Equation \ref{e:filter}, for each setting of the propositional, metric, and temporal bits ($x_j, m_j$ and $\tau$).
Consider the bit setting, $k_1$, where $x_1 = 1, x_2 = 0, m_1 = 0, m_2 = 0$ and $\tau=1$. The setting $k_1$ corresponds to a trace that satisfies the formula $x_1 \untilop x_2$, because while the variable $x_2$ is \textit{false} ($x_2 = 0$), the variable $x_1$ is \textit{true} ($x_1 = 1$) and the formula is satisfied at a future state ($\tau = 1$). 
Evaluating filter $A$ with bit setting $k_1$, we have:
\[f[k_1] = \sigma((1 \cdot 1 + 0 \cdot 2) + (0\cdot0 + 0\cdot0) + \delta(1)\cdot1 - 1.5) = 1\]
as expected. Thus, the output column for row $k_1$ in the temporal truth table $f$ is set to 1. We also apply the binary step function to the learned base case values of the filter, $\var(l, i, T+1)$ and $\var(l-1, j, T+1)$, to produce $\Omega$ and $\omega_j$, respectively. We will assume $\var(l, i, T+1)$ and $\var(l-1, j, T+1)$ are 0, though they may take any value according to Table \ref{tab:weights}. So, we have:
\[\Omega = \sigma(\var(l, i, T+1)) = \sigma(0) = 0 \]
\[\omega_1 = \sigma(\var(l-1, 1, T+1)) = \sigma(0) = 0 \]
\[ \omega_2 = \sigma(\var(l-1, 2, T+1)) = \sigma(0) = 0.\]
Table \ref{tab:truth} shows the completely filled in temporal truth table. 

In the second phase, we convert the temporal truth table into an \LTLf\ formula. Since $\Omega = 0$, the formula will use an until operator and have the form $\phi \untilop \psi$. 
Then, each row in the temporal truth table with output 1 contributes a clause to $\phi$ or $\psi$. Once again, consider the row $k_1$ where $x_1 = 1, x_2 = 0, m_1 = 0, m_2 = 0$, and $\tau=1$. Row $k_1$ represents the clause:
\[x_1 \land \lnot x_2 \land (\nextop \lnot x_1)  \land (\nextop \lnot x_2).\]
Here, we use the strong next operator ($\nextop$) to represent the metric bits, rather than the weak next ($\weaknextop$) operator, because $\omega_1 = 0$ and $\omega_2 = 0$. We add this clause to $\phi$, rather than $\psi$, since $\tau = 1$:
\[\phi \gets \phi \lor (x_1 \land \lnot x_2 \land (\nextop \lnot x_1)  \land (\nextop \lnot x_2)).\]
Repeating the process for every row with output 1 in the table results in the complete sub-formulas $\phi$ and $\psi$. We now have a TNF formula, $\phi \untilop \psi$. Simplifying $\phi \untilop \psi$ using standard rewrite rules (discussed later) results in the formula $x_1 \untilop x_2$.

\subsection{Implementation Details}
\label{sec:implementation}
We implemented \NeuralLTLf{} in Tensorflow using binary cross-entropy loss optimized with Adam~\cite{Adam}. A formula that is satisfied by all the positive traces and violated by all the negative traces will have the minimum cross-entropy loss since the \NeuralLTLf{} network that encodes the formula will perfectly classify every trace. We employ several procedures that increase the accuracy and compactness of the formulas output by \NeuralLTLf{}. \\

\noindent \textbf{Logic Minimization} While every \NeuralLTLf{} filter can be converted into a TNF formula, the TNF formula is generally not human-readable due to its large size. Rather than returning a TNF formula directly, we use the Espresso logic-minimization algorithm to initially reduce the formula from the filter's temporal truth table into a more compact formula~\cite{rudell1986multiple}. Then, we use the Spot \LTLf\ library to further reduce the formula according to \LTLf\ simplification rules~\cite{duret2016spot}. Although Spot is designed for \LTL\ rather than \LTLf, we prove in the Supplementary Material that our use of Spot is valid for \LTLf. 
In our experiments, the average percent reduction in formula size by Espresso and Spot was 91\% and 51\%, respectively.
The initial TNF formulas had an average size of 3278367, the Espresso-reduced formulas had an average size of 650, and the Spot-reduced formulas had an average size of 17.  
\\

\noindent \textbf{Annealing and Random Restarts} There is inevitable information loss when converting the continuous network weights into a discrete temporal truth table. However, to encourage \NeuralLTLf{} to learn representations that maintain high accuracy when discretized, we linearly increase the steepness of the sigmoid activation, $\sigma$, as training progresses. Similarly, while we use the $\delta$ function to restrict $W_Q$ to positive values at test time, we relax this restriction during training. We define a ``leaky" $\delta$ with a small positive slope in the negative region. This negative-region slope is linearly reduced as training progresses. Specifically, given the definitions of $\sigma$ and $\delta$ parameterized by $\beta$ and $\alpha$: 
\[\sigma(x) = \frac{1}{1+e^{-\beta x}} \quad \mbox{and} \quad \delta(x) = \mathrm{max}(x, \alpha x). \]
We use annealing rates $\alpha_d$ and $\beta_d$, updating the values of $\alpha$ and $\beta$ at the end of each epoch by setting $\alpha = \alpha + \alpha_d$ and $\beta = \beta + \beta_d$.

Since formula extraction replaces the sigmoid activation with the binary step function and the leaky $\delta$ with the strict $\delta$, annealing these activations during training increases the likelihood that the extracted formulas will match the behavior of the optimized network. To further increase the chances of learning weights that discretize well, we also use random restarts. We train the network multiple times with different random weight initializations and use the trained network that has the highest accuracy after discretization. We attempted to employ $L_1$ regularization to the activations to further encourage better discretization, but found it made optimization too difficult for Adam. \\

\noindent \textbf{Multiple Networks} In principle, even a very large \NeuralLTLf{} network can produce a compact formula after simplification. However, since larger networks can represent larger formulas, they pose a greater risk of producing a formula that overfits the training data. To balance the goal of learning a compact but also highly accurate formula, we train multiple networks each with a different number of filters on a given set of data. We then choose the smallest formula of the set of formulas with the highest accuracy after extraction. In practice, the choice of network architecture serves as a way of incorporating domain specific knowledge of the formula structure one expects to learn, if such knowledge is available. Though, it is important to note our procedure does not rely on exactly matching the structure of the network with the structure of the formula.

\section{Experiments}

We evaluated our \LTLf\ formula learner, comparing it to approaches from the literature.

\begin{figure*}
    \centering
    \includegraphics[scale=0.32]{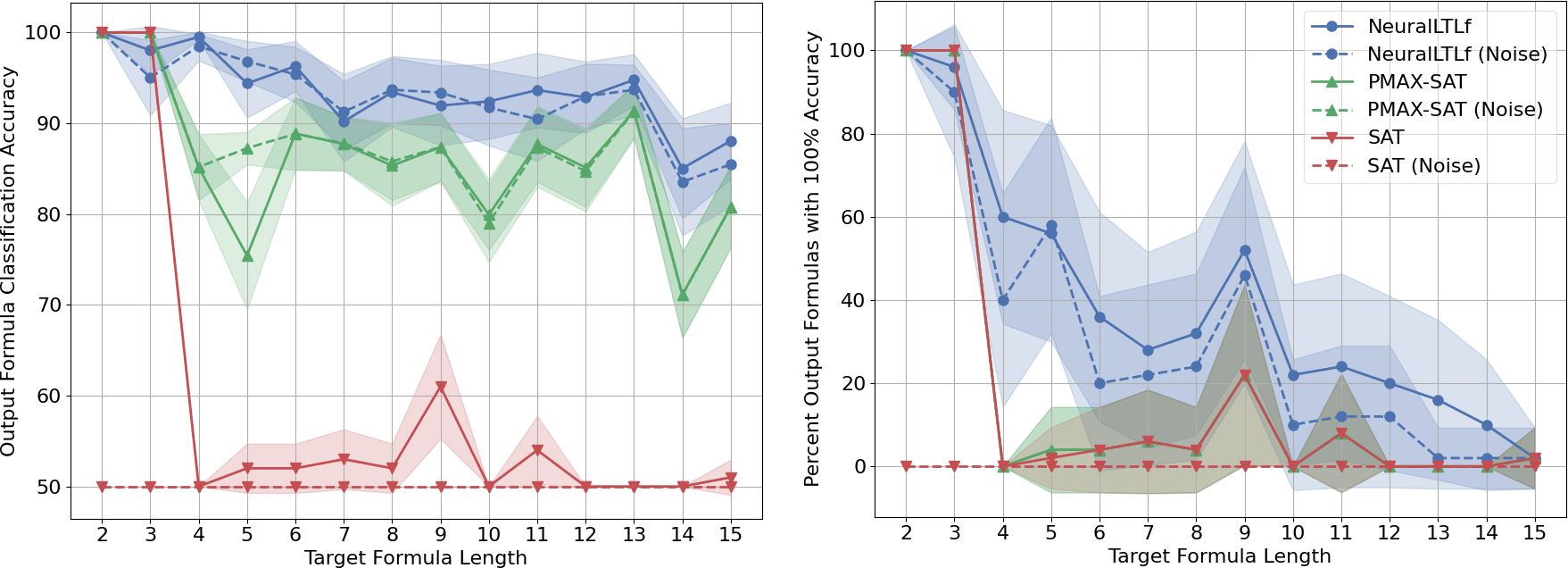}
    \caption{Comparison of the performance of \NeuralLTLf{}, and SAT and PMAX-SAT-based\textbf{} methods on the test set after training on data with and without noise. An accuracy of 50\% for the SAT method indicates the run timed out. 95\% confidence intervals shown.}
    \label{fig:compare}
\end{figure*}

\subsection{\NeuralLTLf{} vs.\ SAT}
To test the scalability of \NeuralLTLf{} with respect to formula size, we evaluated its performance on data from random formulas. Then, using the same data, we swapped 1\% of the labels to additionally test \NeuralLTLf{}'s ability to handle noise. For both experiments, we compared \NeuralLTLf{} with the SAT-based approach by \namecite{camacho2019learning} since their method does not make use of \LTLf\ templates to restrict the space of learnable formulas, like \namecite{kim2019bayesian}, and works out-of-the-box with \LTLf\ rather than \LTL, unlike \namecite{neider2018learning}.
We use their SAT encoding in conjunction with the associated learning algorithm. The algorithm iteratively increases the maximum allowed formula size and reruns the SAT solver until a formula is found. This process guarantees the output formula is optimally compact. 

To increase robustness to noisy labels, we also devised a novel variant of the SAT approach. In the partial maximum satisfiability (PMAX-SAT) problem~\cite{cha1997local},
rather than simply finding a satisfying truth assignment for a Boolean formula, the goal is to satisfy the \emph{maximum} number of a designated set of ``soft'' clauses, while satisfying all of the remaining ``hard'' clauses. Our PMAX-SAT variant uses the same SAT encoding from~\namecite{camacho2019learning}, but designates the clauses enforcing trace satisfaction as soft clauses. Thus, satisfying the maximum number of soft constraints in the PMAX-SAT problem corresponds to producing a formula satisfied by the maximum number of traces. Given the PMAX-SAT problem encoding, we execute a PMAX-SAT solver to learn a formula from the trace data. With the PMAX-SAT variant, we wanted to test whether a modified SAT-based approach could handle noise without prohibitively increasing runtime. \\

\noindent \textbf{Data} First, we generated random qualitative \LTLf\ formulas with $|P| = 3$ by uniform sampling of the \LTLf\ grammar. We generated 50 of each length ranging from 2 to 15 (or as many as possible if the number of unique formulas of a given size was less than 50). The length of an \LTLf\ formula is the sum of the number of temporal operators, binary logical operators, and propositions in the formula.
We threw out formulas that did not include a temporal operator, meaning there were no formulas of size 1.
We converted the formulas into negative normal form following the precedent set by \namecite{camacho2019learning}. Then, we adopted an approach from \namecite{camacho2019learning} and generated a \textit{characteristic sample} of traces for each formula's corresponding minimal deterministic finite-state automaton (DFA) \cite{parekh2001learning}. A set of labeled traces is considered \emph{characteristic} if the set uniquely defines a minimal DFA over a fixed number of states, $N$. Including a characteristic sample as part of the training data 
discouraged each method from oversimplifying the formula.
We mixed the characteristic sample with uniformly sampled random traces such that $|\Pi_P| = |\Pi_N| = 500$ for all formulas. We explored the alternative of using solely random traces, but found that the algorithms reliably found shortcut solutions that did not capture the true target formula. The mix of the characteristic sample and random traces produced much more reliable results. Lastly, the labels of 1\% of the total 1000 traces were inverted to produce a noisy dataset. We resampled the random traces for each formula to create the test data.

The last timestep of each trace in the characteristic sample was repeated such that all traces had length 15. Repeating the last timestep of a trace is guaranteed not to change its truth values with respect to a qualitative formula, as qualitative formulas define stutter-invariant languages \cite{peled1997stutter}. 
However, padding may change the truth values of traces for metric formulas. Because of the complexities of batch training on variable length data, we chose to only use qualitative formulas in our experiments. Accordingly, both \NeuralLTLf{} and the SAT-based methods were modified to only produce qualitative formulas. 

Since our intention was to test the scaling capabilities of each method, these datasets were produced with larger formulas (max size 15 vs 11) than those tested by \namecite{camacho2019learning}. \\

\noindent \textbf{Procedure} Each method was given a maximum runtime of 5 minutes per formula. \NeuralLTLf{} was allowed 3 network architectures each with 1 random restart. Table \ref{tab:networks} displays the chosen architectures. The batch size was set at 100 and the learning rate at 0.005. Each network was run for 3000 epochs or until accuracy after discretization reached 100\%. The sigmoid and ReLU activations were linearly annealed with rates $\beta_d = 0.01$ and $\alpha_d = {-7}\mathrm{e}{-5}$ respectively. All hyperparameters for \NeuralLTLf{}, including network architectures, were chosen via experimentation on held out random formulas. 
The SAT-based methods were run with solvers from Z3~\cite{de2008z3}. If a SAT-based method failed to produce any formula in the alloted time, we defaulted to the formula $true$ (which gives 50\% accuracy). Experiments were conducted on Debian machines with Intel Core i5-4690 CPUs at 3.5 GHz and 8 GB of RAM.

\begin{table}
    \centering
    \caption{The 3 \NeuralLTLf{} network architectures used in the experiment on random formulas. The filter assignments denote the number of filters in each layer with the input layer on the left and the output layer on the right.}
    \label{tab:networks}
    \begin{tabular}{c|c|c}
        Network & Layers & Filter Assignment \\
         1 & 1 & 1 \\
         2 & 2 & $3 \rightarrow 1$ \\
         3 & 3 & $5 \rightarrow 5 \rightarrow 1$
    \end{tabular}
\end{table}

\subsection{Results}
We used accuracy, defined as the percentage of correctly classified traces, as a performance metric to compare approaches. Figure~\ref{fig:compare} shows the performance of \NeuralLTLf{}, SAT, and PMAX-SAT on the test datasets after training on the original and noisy datasets. In both settings, \NeuralLTLf{} consistently produced formulas with high accuracy over all target formula lengths. The standard SAT approach by \namecite{camacho2019learning} began to time out on most formulas past a target formula length of 3. While \namecite{camacho2019learning} test the scalability of their approach using an active learning setup with no more than 40 traces per formula, we used passive learning and 1000 traces per formula which caused the method to time out on much smaller formulas.
Additionally, the SAT approach timed out on all formulas in the noisy setting. However, our PMAX-SAT variant performed significantly better than the standard SAT approach. In both settings and over all target formula lengths, our PMAX-SAT variant produced formulas with only slightly worse accuracy than \NeuralLTLf{}.

Further investigation into the formulas produced by \NeuralLTLf{} and PMAX-SAT revealed that \NeuralLTLf{} produced larger formulas on average (Figure~\ref{fig:length}). PMAX-SAT was unable to produce formulas larger than size 3 in the allotted time. Since in a majority of cases, a size 3 formula was smaller than the target formula, PMAX-SAT sacrificed accuracy for size. With larger formulas, \NeuralLTLf{} was able to fit to more patterns in the data and achieve higher accuracy than the SAT-based methods. However, unlike the SAT-based methods, \NeuralLTLf{} is not guaranteed to produce an optimally compact formula, and in some instances \NeuralLTLf{} produced very large, unintelligible formulas. For instance on data for a target formula $a \lor \globallyop{} \lnot c$, one \NeuralLTLf{} network learned the formula $(a \lor (((a \land \lnot c) \lor (\lnot b \land \lnot c)) \land \globallyop{} \lnot c)) \releaseop{} (a \lor ((a \lor (b \land \lnot c)) \land \globallyop{} \lnot c) \lor (((a \land \lnot c) \lor (\lnot b \land \lnot c)) \land \globallyop{} \lnot c))$. This formula perfectly classified the data, but clearly its size is undesirable. We set a maximum formula-size threshold of 25 as an informal notion of readability. When selecting an output formula from those produced by the 3 networks we trained for each target formula (Table~\ref{tab:networks}), we ignored those larger than 25.

\begin{figure}
    \centering
     \includegraphics[scale=0.32]{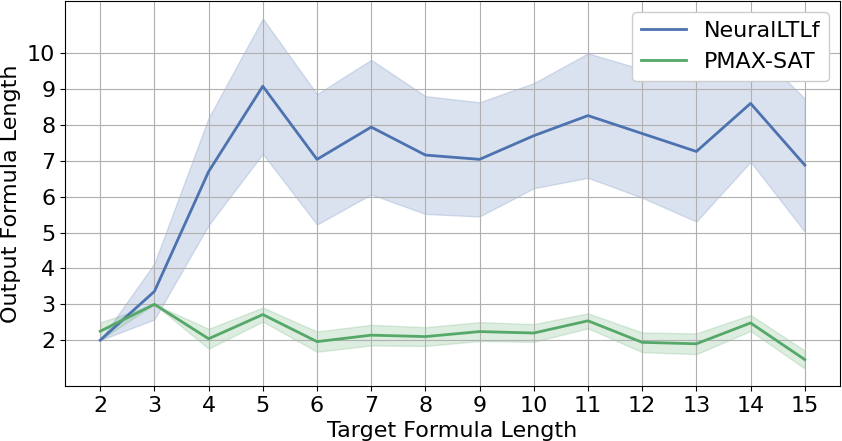}
    \caption{The length of formulas produced by \NeuralLTLf{} and PMAX-SAT on the non-noisy data with 95\% confidence intervals shown.}
    \label{fig:length}
\end{figure}

We also calculated the percentage of output formulas with 100\% classification accuracy for each method. The percentage of formulas with perfect accuracy produced by PMAX-SAT closely tracked the percentage of formulas with perfect accuracy produced by SAT, indicating that PMAX-SAT was able to find a perfect formula in nearly all cases for which SAT did not timeout. Notably, \NeuralLTLf{} was able to find significantly more formulas with 100\% accuracy at greater target formula lengths than the others.

As an example of the qualities of the formulas produced by the different methods, consider the target formula $b \lor \globallyop{} \lnot a \lor (b \releaseop{} a)$ of size 8. When given the data for this formula, \NeuralLTLf{} produced the exact target formula. The SAT method timed out and the PMAX-SAT method produced the formula $b$, which gave 91\% accuracy. While $b$ captures part of the target formula and classifies a majority of the traces correctly, much of the original formula's nuance is lost. Since the SAT-based methods could only produce formulas up to size 3 in the allotted time, the formulas produced by these methods often lacked relevant components. \NeuralLTLf{}'s ability to produce larger formulas in a shorter amount of time enabled it to find more complete formulas that better fit the data.

\section{Discussion}
We presented \NeuralLTLf{}, a neural network solution to the \LTLf\ learning problem, and evaluated its ability to scale to larger formulas as well as its robustness to noise. When tested on data sampled from random formulas, we found that \NeuralLTLf{} is capable of producing more accurate formulas on more complex tasks than the SAT-based approaches. When tested on a noisy version of the same data, we found that \NeuralLTLf{}’s performance was minimally affected.

However, there are a number of points at which \NeuralLTLf{} may fail to produce both a highly accurate and interpretable formula. During formula extraction, information can be lost when the activations of the network are discretized. The extracted formulas were on average 1\% less accurate than the trained \NeuralLTLf{} networks. We sought to increase the probability the networks would learn representations that discretize well by annealing the activation functions and using random restarts.

Additionally, \NeuralLTLf{} filters are highly expressive. A network architecture consisting of a small set of filters can represent a multitude of \LTLf\ formulas. Larger network architectures tended to learn formulas that were too large for human readability. Comparing multiple network architectures on any given dataset helped to alleviate this issue. 

Lastly, constructing and minimizing the temporal truth table in the formula-extraction step can require significant computational effort when the number of filters or propositions is large. The number of rows in the truth table is exponential in these values. Nevertheless, our experiments indicate that \NeuralLTLf{} does not suffer from scaling issues to the same degree as existing approaches. 

Restricting the expressiveness of \NeuralLTLf{} filters would help to address these issues. A smaller space of expressions would limit information loss during discretization, reduce the probability of unintelligible formulas, and allow for a more efficient formula-extraction procedure. We leave these topics for further research.

\bibliography{ltl}

\newpage

\section{Correctness of Conversion Procedure}

Here we prove the correctness of the procedure that converts the learned filter weights into an temporal truth table and then an \LTLf\ formula.

Because temporal truth tables represent TNF expressions, there are some settings of the table that result in logically impossible expressions, and are therefore invalid. Specifically, for any given setting of the $x_j$, $m_j$ bits, it cannot be the case that the row with $\tau=0$ has $f=1$ and the corresponding row with $\tau=1$ has $f=0$. If this were the case, that would mean that some clause of the temporal expression appears in $\psi$, but explicitly does not appear in $\phi$. This situation cannot occur because a clause's existence in $\psi$ guarantees that it is implicitly in $\phi$, by definition of the until operation. Tables that have such a property that create logically impossible expressions will be referred to as \emph{invalid}. By design, \NeuralLTLf{} filters create only valid truth tables when trained.

\begin{lemma}{Any \NeuralLTLf{} filter will produce a valid temporal truth table.}
\label{lem:valid}
\end{lemma}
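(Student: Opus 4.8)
The plan is to unwind the definition of an invalid temporal truth table and show that the single forbidden pattern can never arise from Equation~\ref{e:filter} when the extraction-time forms of $\sigma$ and $\delta$ are used. Recall that a table is invalid exactly when there is some assignment to the propositional bits $x_1,\dots,x_n$ and metric bits $m_1,\dots,m_n$ for which the row with $\tau=0$ has $f=1$ while the companion row with $\tau=1$ has $f=0$. So it suffices to fix an arbitrary such assignment and prove the implication $f\big|_{\tau=0}=1 \implies f\big|_{\tau=1}=1$.

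First I would isolate the part of the filter's preactivation that does not depend on $\tau$. Writing
\[
S \;=\; \sum_j W_P(l,i,j)\,x_j \;+\; \sum_j W_M(l,i,j)\,m_j \;+\; b(l,i),
\]
Algorithm~\ref{alg:filter-table} sets $f\big|_{\tau=0} = \sigma(S)$ and $f\big|_{\tau=1} = \sigma\big(S + \delta(W_Q(l,i))\big)$, since the only occurrence of $\tau$ in Equation~\ref{e:filter} is the term $\delta(W_Q(l,i))\,\tau$. The next step is to observe that during formula extraction $\delta$ is the function $\max(0,x)$, so $\delta(W_Q(l,i)) \ge 0$ regardless of the learned value of $W_Q(l,i)$; hence $S + \delta(W_Q(l,i)) \ge S$.

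Finally I would invoke monotonicity of the extraction-time activation $\sigma = \mathds{1}_{[0,\infty)}$: it is non-decreasing, so $x \le y$ implies $\sigma(x) \le \sigma(y)$. Applying this with $x = S$ and $y = S + \delta(W_Q(l,i))$ gives $f\big|_{\tau=1} = \sigma(S+\delta(W_Q(l,i))) \ge \sigma(S) = f\big|_{\tau=0}$. In particular, whenever $f\big|_{\tau=0}=1$ we also have $f\big|_{\tau=1}=1$, which is precisely the negation of the invalidity condition for this assignment. Since the assignment to the $x_j$ and $m_j$ bits was arbitrary, no invalid pattern occurs and the table is valid; note the end-of-trace bits $\Omega$ and $\omega_j$ play no role here, as they only affect which operator and which next-modalities are emitted, not the $f$ column. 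I do not expect a genuine obstacle: the only point requiring care is to make sure the argument is stated for the discrete extraction-time $\sigma$ and $\delta$ (not their smoothed training-time counterparts), and that the monotonicity step is spelled out so it covers the boundary case $S = 0$ cleanly.
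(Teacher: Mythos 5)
Your proposal is correct and follows essentially the same route as the paper's proof: both isolate the $\tau$-dependent term $\delta(W_Q(l,i))\tau$ in Equation~\ref{e:filter}, use that $\delta=\max(0,x)$ forces $\delta(W_Q(l,i))\ge 0$, and conclude via monotonicity of the step function that the forbidden pattern ($f=1$ at $\tau=0$ but $f=0$ at $\tau=1$) cannot occur. The only difference is presentational --- you argue directly by monotonicity while the paper phrases it as a contradiction deriving $\delta(W_Q(l,i))<0$ --- and your version is, if anything, slightly cleaner in handling the boundary case.
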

\begin{proof}
An invalid truth table results when some setting of the $x_j$ and $m_j$ bits produces an output of $f=0$ when $\tau=1$ and $f=1$ when $\tau=0$. We assume for contradiction that we have an invalid truth table.

Using Equation~1,
$\tau$ from the truth table is represented by $\var(l,i,t+1)$ and the filter activation, $f$, is $\var(l,i,t)$. Consider some filter $i$ applied to identical settings of the propositional variables, but when $\var(l, i, t+1)=1$ then $\var(l,i,t)=0$, and when $\var(l,i,t'+1)=0$ then $\var(l,i,t')=1$. This situation is precisely what would cause the filter to produce an invalid table. Note that
\begin{align*}
  \var(l,i,t) &< \var(l,i,t') \\
  \delta(W_Q(i)){\var}(l,i,t+1) &< \delta(W_Q(i)){\var}(l,i,t'+1) \\
  \delta(W_Q(i))  &< 0.
\end{align*}
The second line is obtained by substitution from Equation~1.
The derivation shows that, for a filter to create an invalid table, $\delta(W_Q(i))<0$. However, $\delta = \text{max}(0, x)$, so $\delta(W_Q(i))$ is non-negative for any \NeuralLTLf{} filter. Therefore, all \NeuralLTLf{} filters produce valid truth tables.
\end{proof}

Beyond the filters encoding only valid truth tables, it is important that the method for interpreting those tables from and into \LTLf\ formulas is correct. Specifically, it should be the case that \NeuralLTLf{} filters can successfully be interpreted into \LTLf\ expressions. We note here that a successful interpretation is one that results in a valid expression that approximates, but need not exactly match the behavior of, the \NeuralLTLf{} filter---that happens since the interpretation uses a binary step function to discretize the operation of the filter. We also require that a successful interpretation create a temporal truth table that is \emph{\LTLf-expression preserving}. That is, any temporal truth table created from a given \LTLf\ expression will result in an equivalent \LTLf\ expression.

\begin{theorem}
Given a learned \NeuralLTLf{} filter, the process of interpreting its weights into \LTLf\ expressions is correct---\NeuralLTLf{} filters encode valid temporal truth tables that are \LTLf-expression preserving. That is, given an \LTLf\ expression $g$ and its temporal truth table $T$, one can create a formula $h$ from $T$ via Algorithm~2. Then, $g=h$ and $T$ is a valid temporal truth table.
\end{theorem}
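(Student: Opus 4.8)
The statement has two parts --- that $T$ is valid and that $g = h$ --- and the first is exactly Lemma~\ref{lem:valid}, since $T$ is the table produced by Algorithm~1 from the given filter. So the plan is to establish $g = h$ as a round-trip identity: pushing a TNF expression $g$ through the table-construction step and then back through Algorithm~2 returns an \LTLf-equivalent formula. Write $g = \Phi \untilop \Psi$ when $\Omega = 0$ and $g = \Phi \weakuntilop \Psi$ when $\Omega = 1$, where $\Phi = \bigvee_k C_k$ and $\Psi = \bigvee_k D_k$ are in full DNF and each clause fixes, for every proposition $j$, a current-state literal over $x_j$ together with a next-state literal $\nextop x_j$, $\nextop\lnot x_j$, $\weaknextop x_j$, or $\weaknextop\lnot x_j$; the choice of $\nextop$ versus $\weaknextop$ on proposition $j$ is the same in every clause, which is why it can be recorded by the single bit $\omega_j$ (it is governed by the single base value $\var(l-1,j,T+1)$). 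I would give the $\untilop$ case in detail, the $\weakuntilop$ case being identical except for carrying a $\globallyop$ term.

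The heart of the argument is a precise description of which rows of $T$ carry $f = 1$. A row is an assignment $k = (x_1,\dots,x_n,m_1,\dots,m_n,\tau)$; because inside a trace $\nextop$ and $\weaknextop$ agree and the bits $m_j$ record next-step proposition values, the pair consisting of $k$ and the fixed bits $\omega_1,\dots,\omega_n$ determines a unique full temporal clause $\hat C_k$, which is exactly the clause Algorithm~2 emits from row $k$. Because $\Phi$ and $\Psi$ are full DNF, the subexpression $\Phi$ evaluates to true on the $(x,m)$-part of $k$ iff $\hat C_k$ is one of its disjuncts, and likewise for $\Psi$. Since the until weights realize the recursive semantics of $\untilop$ --- the value now is true iff $\Psi$ holds now, or $\Phi$ holds now and the value at the next step (the $\tau$ column) is true --- one obtains: for $\tau = 1$, $f[k] = 1$ iff $\hat C_k$ is a disjunct of $\Phi \lor \Psi$; for $\tau = 0$, $f[k] = 1$ iff $\hat C_k$ is a disjunct of $\Psi$. (This also re-derives validity from the syntactic side: any $(x,m)$ with $f = 1$ at $\tau = 0$ has $f = 1$ at $\tau = 1$.) I expect this to be the main obstacle, since it requires care that each full temporal clause matches exactly one propositional/metric assignment, that the $\nextop$/$\weaknextop$ distinction is invisible inside table rows and is carried instead by $\omega_j$, and that $\Omega$ faithfully encodes the strong-versus-weak choice.

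It then remains to run Algorithm~2 on $T$. It returns $\phi_h \untilop \psi_h$ (because $\Omega = 0$), where $\phi_h$ is the disjunction of $\hat C_k$ over rows with $f = 1$ and $\tau = 1$ and $\psi_h$ the disjunction over rows with $f = 1$ and $\tau = 0$, with $\nextop$ restored when $\omega_j = 0$ and $\weaknextop$ when $\omega_j = 1$ --- all matching the conventions used for $g$. By the previous paragraph $\phi_h \equiv \Phi \lor \Psi$ and $\psi_h \equiv \Psi$, so $h = (\Phi \lor \Psi)\untilop\Psi$, and I would finish by invoking the \LTLf{} identity $(\alpha \lor \beta)\untilop\beta \equiv \alpha\untilop\beta$: the right-to-left inclusion is immediate, and left-to-right follows by taking the earliest step $t'$ at which $\beta$ holds and observing that $\alpha$ holds at every step strictly before $t'$ by minimality. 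Applying the same step to the $\globallyop$ disjunct in $\alpha\weakuntilop\beta \equiv (\alpha\untilop\beta)\lor\globallyop\alpha$ yields $(\alpha\lor\beta)\weakuntilop\beta \equiv \alpha\weakuntilop\beta$ for the $\Omega = 1$ case. Hence $h \equiv g$, which with Lemma~\ref{lem:valid} proves the theorem; I would add the remark that for a TNF expression read off from an actual trained filter the uniform-$\omega_j$ hypothesis holds automatically, so the result applies to every learned \NeuralLTLf{} filter.
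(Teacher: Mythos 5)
Your proposal is correct, but it closes the argument along a genuinely different route than the paper. The paper's proof builds the table from $g$, assumes $g\neq h$ for contradiction, and argues clause-by-clause (missing clause, extra clause, wrong operator) that every discrepancy contradicts how the table was filled, concluding that $g$ and $h$ are \emph{identical} TNF expressions. You instead characterize exactly which rows carry $f=1$ via the one-step unfolding of the until semantics ($f = \Psi \lor (\Phi \land \tau)$), which gives $\phi_h \equiv \Phi\lor\Psi$ and $\psi_h \equiv \Psi$, hence $h = (\Phi\lor\Psi)\untilop\Psi$, and you then recover $g$ up to equivalence via the absorption identity $(\alpha\lor\beta)\untilop\beta \equiv \alpha\untilop\beta$ (and its $\weakuntilop$ analogue). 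This is a real difference in content: your characterization matches how the table is actually filled by the filter weights --- the paper's own worked example (Table~2 and Section~3.3) produces $(x_1\lor x_2)\untilop x_2$ before simplification, not $x_1\untilop x_2$ literally --- so your version delivers semantic equivalence consistent with the construction, at the price of needing the extra rewrite lemma and the explicit observation that the $\nextop$/$\weaknextop$ choice is uniform per proposition (encoded by the single bit $\omega_j$). The paper's contradiction argument is shorter and yields a syntactic identity claim, but only under the reading that a row is set to $1$ precisely when its clause appears in $\Phi$ (for $\tau=1$) or $\Psi$ (for $\tau=0$), a reading in tension with its own example; your route also neatly re-derives table validity from the syntactic side, whereas the paper obtains it only from Lemma~1 on filter weights. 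Both arguments establish what the theorem needs (the paper's notion of \LTLf-expression preservation asks for an \emph{equivalent} formula), so your proof stands, and is arguably the more faithful one.
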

\begin{proof}
By Lemma~\ref{lem:valid}, any temporal truth tables created by a \NeuralLTLf{} filter are valid. To prove the soundness of our interpretation method, we must show that any valid temporal truth table is \LTLf-expression preserving. 

Consider an arbitrary \LTLf\ expression $g$, in TNF. Evaluating this expression for every assignment of the variables in the temporal truth table will allow us to construct a valid temporal truth table.  Algorithm~2 creates a TNF expression from the table, $h$. Assume for contradiction that $g$ and $h$ differ in some way. For $h$ to differ from $g$, it must be missing a clause, have an additional clause in $\phi$ or $\psi$, or have a different operator than $g$.

If $h$ is missing a clause that was in $g$, that implies that the value of the temporal truth table for that clause was $0$. However, if that clause was in $g$, then its value in the table would have been $1$---a contradiction.

If $h$ has an extra clause that $g$ does not have, the value of that clause in the table was $1$. However, if that clause was not present in $g$, then the corresponding value of the table for that row would be $0$---a contradiction.

For $h$ to have a $\untilop$ where $g$ has a $\weakuntilop$ or an $\nextop$ where $g$ has a $\weaknextop$ or vice versa, it would need to have a 0 where $g$ has a 1 or vice versa in the $n+1$ extra bits of the truth table. That would contradict that the truth table was computed from $g$.

Any difference in $g$ and $h$ results in a contradiction in the structure of the temporal truth table, therefore $g$ and $h$ are identical. This argument shows that our method of interpreting truth tables is \LTLf-expression preserving. Since \NeuralLTLf{} filters encode only valid tables (Lemma~\ref{lem:valid}), and our method for interpreting valid tables is sound, we can interpret any \NeuralLTLf{} filter as a valid \LTLf\ expression.
\end{proof}

\section{Using Spot for \LTLf\ Simplification}

Spot is a library we use to simplify formulas extracted from \NeuralLTLf\ networks \cite{duret2016spot}. However, Spot is designed for \LTL\ not \LTLf\ so its use requires justification. We show any simplication rule that is valid for \textit{qualitative} \LTL\ is also valid for \textit{qualitative} \LTLf. Since we only test qualitative formulas in our experiments, using Spot for simplification does not introduce any invalid simplifications.

Given a trace $\pi$, $\pi_t$ is the truth assignment at timestep $t$ and $\overline{\pi_t}$ denotes that a timestep is repeated infinitely. We first prove the following useful lemma.

\begin{lemma}
Take a finite trace $\pi^f = \pi_0...\pi_n$ and repeat the last timestep to create an infinite trace, $\pi^i = \pi_0...\pi_{n-1}\overline{\pi_n}$. Then given a qualitative formula $\phi$, $\pi^f$ satisfies $\phi$ interpreted as \LTLf\ if and only if $\pi^i$ satisfies $\phi$ interpreted as \LTL. That is, $\pi_0...\pi_n \models \phi \iff \pi_0...\pi_{n-1}\overline{\pi_n} \models \phi$.
\end{lemma}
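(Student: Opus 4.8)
The plan is to prove a stronger, timestep-indexed version of the statement by structural induction on the qualitative formula $\phi$. Writing $T = n+1$ for the length of $\pi^f$, I would show: for every qualitative $\phi$ and every $t$ with $0 \le t \le n$, the finite suffix $\pi_t\cdots\pi_n$ satisfies $\phi$ under the \LTLf\ semantics if and only if the infinite suffix $\pi_t\cdots\pi_{n-1}\overline{\pi_n}$ satisfies $\phi$ under the \LTL\ semantics. The lemma is the $t=0$ instance. The strengthening is essential: the $t=n$ instance says exactly that a single-letter trace $\pi_n$ satisfies $\phi$ as \LTLf\ iff $\overline{\pi_n}$ satisfies $\phi$ as \LTL, and this auxiliary fact is what the inductive step for the temporal operator needs, so folding it into the same induction is the clean move. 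It suffices to take atomic propositions, $\lnot$, $\land$, and $\untilop$ as primitive (every other qualitative operator $\eventuallyop,\globallyop,\weakuntilop,\releaseop$ is definable from these), or equivalently to replay the $\untilop$ argument below for each derived operator using its own semantic clause.

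The base case $\phi = p$ and the cases $\phi = \lnot\psi$, $\phi = \psi_1 \land \psi_2$ are immediate: the truth of $p$ at timestep $t \le n$ depends only on $\pi_t$, which is the same letter in both traces, and the clauses for $\lnot$ and $\land$ coincide in \LTLf\ and \LTL. The only real work is $\phi = \psi_1 \untilop \psi_2$. For the forward direction, a finite witness $k$ with $t \le k \le n$ for satisfaction in $\pi^f$ transfers verbatim to $\pi^i$: apply the induction hypothesis to $\psi_2$ at index $k$ and to $\psi_1$ at each index $m$ with $t \le m < k$, all of which lie in $\{0,\dots,n\}$. For the backward direction, suppose $\pi^i$ satisfies $\psi_1 \untilop \psi_2$ at $t$ with witness $k \ge t$. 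If $k \le n$, the induction hypothesis applies directly. If $k > n$, note that every suffix of $\pi^i$ at an index $\ge n$ equals $\overline{\pi_n}$; hence $\pi^i,k\models\psi_2$ gives $\overline{\pi_n}\models\psi_2$, and the $t=n$ case of the induction hypothesis gives $\pi_n\models\psi_2$ as \LTLf. Also $\pi^i,m\models\psi_1$ for all $t \le m < k$, in particular for all $t \le m < n$, so the induction hypothesis yields $\pi_m\cdots\pi_n\models\psi_1$ as \LTLf\ for those $m$. Taking $n$ itself as the witness then shows $\pi^f$ satisfies $\psi_1 \untilop \psi_2$ at $t$.

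I expect the main obstacle to be exactly this last move — ``pulling the witness back to $n$'' in the backward direction of the $\untilop$ case. It works because the tail of $\pi^i$ is a constant word, so a witness lying beyond the finite horizon can always be replaced by the last genuine timestep $n$; this is precisely the place where the \emph{qualitative} restriction is used, and the same argument would fail for $\nextop$-style metric operators, matching the paper's remark that padding can change the truth values of metric formulas. Assembling the cases finishes the induction, and instantiating at $t=0$ gives the lemma; if one prefers to keep $\lnot$ out of the grammar and argue in negation normal form, one adds the dual case $\phi = \psi_1 \releaseop \psi_2$ and runs the symmetric version of the witness argument for its unfolding.
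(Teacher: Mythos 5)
Your proof is correct, but it takes a genuinely different route from the paper's. The paper dispatches this lemma in one line by appealing to the stutter-invariance of qualitative (next-free) formulas, citing Peled and Wilke: repeating the last timestep is a stuttering of the trace, so truth values are unchanged. You instead give a self-contained structural induction on $\phi$, with the hypothesis strengthened to all suffix positions $t \in \{0,\dots,n\}$, and the only nontrivial work in the $\untilop$ case, where a witness $k > n$ on the infinite side is pulled back to the last genuine position $n$ using the fact that the tail of $\pi^i$ is a constant word. What the paper's appeal buys is brevity; what your induction buys is rigor precisely at the point the citation glosses over: the classical stutter-invariance results compare two traces under the \emph{same} semantics, whereas the lemma is a cross-semantics claim (finite trace under \LTLf{} versus infinite extension under \LTL), so your explicit argument actually closes a small gap rather than merely re-deriving a known fact. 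Your observation that the witness-pullback step is exactly where the qualitative restriction is used (and would fail for $\nextop$) also matches the paper's remark that padding can change truth values of metric formulas. One cosmetic note: since the paper's qualitative fragment keeps negation, your choice of $\{p,\lnot,\land,\untilop\}$ as primitives is fine, and the optional negation-normal-form variant with $\releaseop$ is not needed.
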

\begin{proof}
Qualitative \LTL\ and \LTLf\ formulas are stutter-invariant, so repeating timesteps or removing timesteps does not change the truth value of a trace \cite{peled1997stutter}.
\end{proof}

\begin{theorem}
All qualitative \LTL\ rewritings are also valid \LTLf\ rewritings.
\end{theorem}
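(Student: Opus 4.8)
The plan is to reduce the statement to a transfer of \emph{semantic equivalence} from \LTL\ to \LTLf. Throughout, every formula in sight is qualitative: the formulas handed to Spot are qualitative, and qualitativeness is inherited by subformulas (a formula containing no $\nextop$ has no subformula containing $\nextop$), so every subformula matched by a rewrite rule---and every formula produced by one---is again qualitative. A single rewriting step replaces an occurrence of a subformula $g$ by a formula $h$ such that $g$ and $h$ are equivalent as \LTL\ formulas over all infinite traces; the step is a \emph{valid \LTLf\ rewriting} if it always preserves satisfaction over finite traces. Hence it suffices to establish two things: (i) if $g$ and $h$ are qualitative and agree on all infinite traces under the \LTL\ semantics, then they agree on all finite traces under the \LTLf\ semantics; and (ii) replacing an occurrence of $g$ by an \LTLf-equivalent $h$ inside a larger qualitative formula preserves \LTLf\ satisfaction of that formula.

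For (i), fix a finite trace $\pi^f = \pi_0\dots\pi_n$ and let $\pi^i = \pi_0\dots\pi_{n-1}\overline{\pi_n}$ be its stuttered infinite extension. Since $g$ and $h$ are qualitative, the Lemma gives $\pi^f \models g$ under \LTLf\ iff $\pi^i \models g$ under \LTL, and likewise $\pi^f \models h$ iff $\pi^i \models h$. By hypothesis, $\pi^i \models g$ iff $\pi^i \models h$. Chaining the three biconditionals yields $\pi^f \models g$ iff $\pi^f \models h$; as $\pi^f$ was arbitrary, $g$ and $h$ define the same finite-trace language, i.e.\ $g = h$ as \LTLf.

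For (ii), observe that a rewrite may act on a subformula buried under arbitrarily many temporal operators, so that subformula is evaluated not only at timestep $0$ but on arbitrary suffixes. This is harmless: every suffix of a finite trace is itself a finite trace, so part (i) applies on each suffix and $g$ and $h$ agree there too. Because \LTLf\ semantics is compositional---the truth value of a formula at timestep $t$ is a fixed function of the truth values of its immediate subformulas at timesteps $\ge t$---a straightforward induction on the structure of the surrounding context shows that substituting $h$ for $g$ changes no truth value on any finite trace. Combining (i) and (ii), every qualitative \LTL\ rewriting preserves the finite-trace language and is therefore a valid \LTLf\ rewriting.

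The main obstacle, and the only place requiring care, is the gap between the Lemma---stated only for satisfaction at the initial timestep---and the fact that Spot rewrites deeply nested subformulas. The resolution is the short remark above: suffixes of finite traces are finite traces, and \LTLf\ is compositional, so the Lemma lifts to every position. A minor secondary point, already folded into the setup, is that one must know every $g$ and $h$ encountered are qualitative so that the Lemma is applicable; this is exactly the inheritance of qualitativeness by subformulas.
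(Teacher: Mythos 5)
Your proposal is correct, and its core is exactly the paper's argument: for qualitative $g$ and $h$ that agree on all infinite traces, you sandwich the \LTL\ equivalence between two applications of the stuttering lemma (finite trace $\leftrightarrow$ its stutter-extended infinite trace) to conclude agreement on all finite traces --- this is the paper's entire proof, which treats a rewriting as a whole-formula equivalence. Your part (ii) goes a step further than the paper: you explicitly justify that applying such an equivalence to a subformula nested under temporal operators is also sound, by observing that suffixes of finite traces are finite traces (so the equivalence holds at every position) and that \LTLf\ semantics is compositional, giving a standard congruence/substitution induction; you also note that qualitativeness is inherited by subformulas, so the lemma remains applicable. The paper leaves this substitution step implicit, so your version is a slightly more careful rendering of the same approach rather than a different one; the extra care buys an argument that directly matches how Spot actually rewrites (on subformulas), at the cost of a short additional induction.
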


\begin{proof}
Consider the qualitative \LTL\ rewriting $\phi \equiv \psi$. That is, for infinite traces $\pi^i \models \phi \iff \pi^i \models \psi.$ We want to show for \LTLf\ on finite traces $\pi^f \models \phi \iff \pi^f \models \psi.$ 

Take a finite trace $\pi^f = \pi_0...\pi_n$. By Lemma 1, if $\pi_0...\pi_n \models \phi$, then the infinite trace $\pi_0...\pi_{n-1}\overline{\pi_n} \models \phi$. Then because $\phi \equiv \psi$, we have $\pi_0...\pi_{n-1}\overline{\pi_n} \models \psi$. Again by Lemma 1, the finite trace $\pi_0...\pi_n \models \psi$. Thus for every finite trace $\pi^f \models \phi \implies \pi^f \models \psi$. The same argument applies to show $\pi^f \models \psi \implies \pi^f \models \phi$. So $\pi^f \models \psi \iff \pi^f \models \phi$ and the rewriting $\phi \equiv \psi$ is valid for \LTLf. 
\end{proof}

Because all qualitative \LTL\ rewriting are also valid \LTLf\ rewritings, our use of Spot is valid.

\section{Precision and Recall}

While we use accuracy as our primary comparison metric, we show precision and recall statistics for our experiments in Figure~\ref{fig:precision} and Figure~\ref{fig:recall}. These metrics follow similar trends as the accuracy metric. Though, the recall for the SAT method is always 1 since the method always produces formulas that perfectly classify the data, expect when it times out and defaults to $true$. However, $true$ has no false negatives. 

\begin{figure}
    \centering
    \includegraphics[width=\columnwidth]{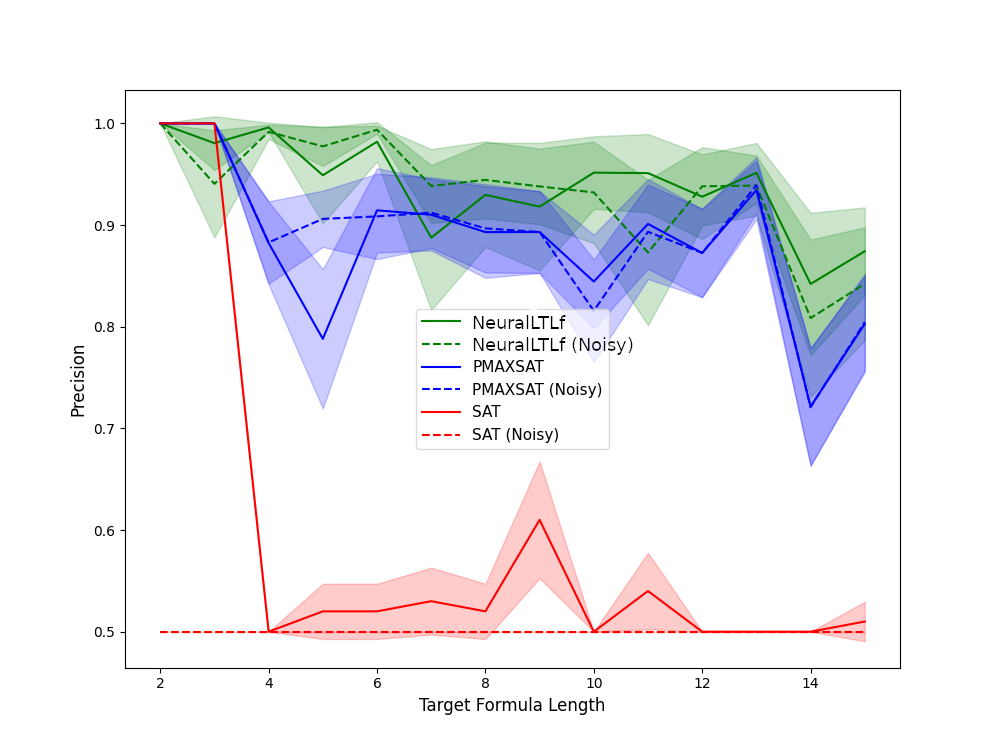}
    \caption{Precision statistics for both the original and noisy synthetic data. 95\% confidence intervals shown.}
    \label{fig:precision}
\end{figure}

\begin{figure}
    \centering
    \includegraphics[width=\columnwidth]{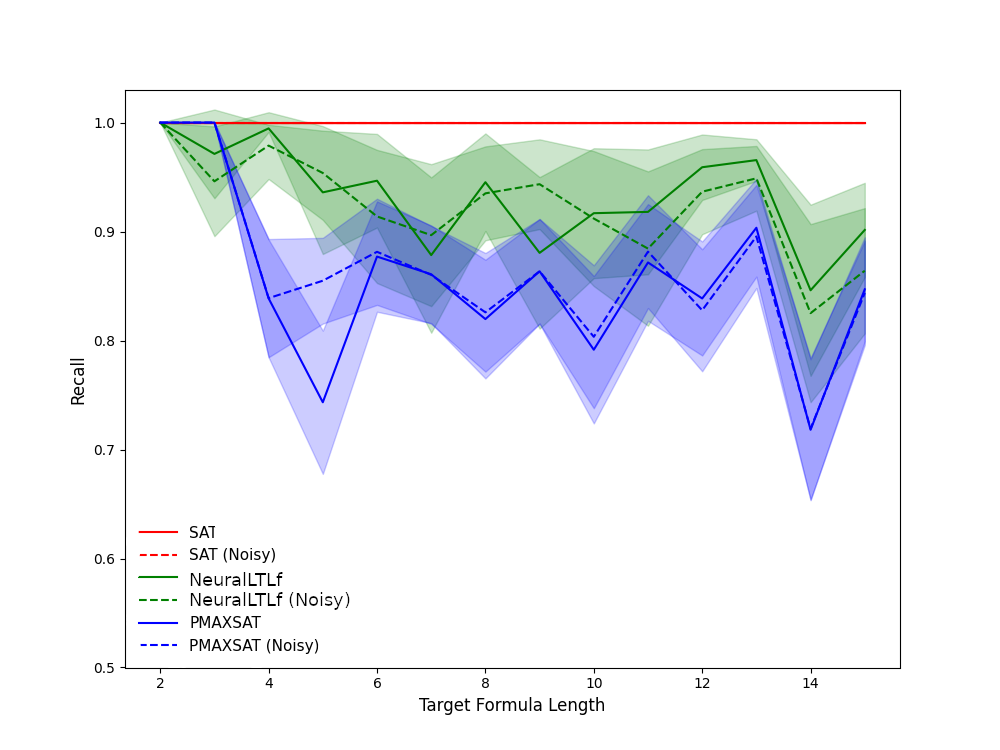}
    \caption{Recall statistics for both the original and noisy synthetic data. 95\% confidence intervals shown.}
    \label{fig:recall}
\end{figure}

\end{document}